\documentclass[sigconf]{acmart}
\usepackage{enumitem}
\usepackage{multirow}
\usepackage{makecell}
\usepackage{color}
\usepackage{graphicx}
\usepackage[utf8]{inputenc} 
\usepackage[T1]{fontenc}    
\usepackage{hyperref}       
\usepackage{url}            
\usepackage{booktabs}       
\usepackage{amsfonts}       
\usepackage{nicefrac}       
\usepackage{microtype}      
\usepackage{xcolor}         
\usepackage{marvosym}
\usepackage{colortbl}
\usepackage{tcolorbox}
\usepackage{longtable}
\usepackage{subcaption}
\usepackage{pifont}
\usepackage{framed}
\usepackage[utf8]{inputenc}
\usepackage{longtable}
\definecolor{mypink}{rgb}{.99,.91,.95}
\definecolor{mygreen}{rgb}{.9,.99,.9}
\definecolor{myorange}{RGB}{255, 217, 192}
\definecolor{mygray}{gray}{.9}
\definecolor{shadecolor}{rgb}{0.9,0.9,0.9} 
\AtBeginDocument{%
  }

\copyrightyear{2026}
\acmYear{2026}
\setcopyright{cc}
\setcctype{by}
\acmConference[KDD '26]{Proceedings of the 32nd ACM SIGKDD Conference on Knowledge Discovery and Data Mining V.2}{August 09--13, 2026}{Jeju Island, Republic of Korea}
\acmBooktitle{Proceedings of the 32nd ACM SIGKDD Conference on Knowledge Discovery and Data Mining V.2 (KDD '26), August 09--13, 2026, Jeju Island, Republic of Korea}
\acmDOI{10.1145/3770855.3818367}
\acmISBN{979-8-4007-2259-2/2026/08}

\begin{document}

\title{Denoising Implicit Feedback for Cold-start Recommendation}

\author{Gaode Chen}
\authornote{Equal contributions.}
\affiliation{%
  \institution{Kuaishou Technology}
  \city{Beijing}
  \country{China}
  }
\email{chengaode19@gmail.com}

\author{Shicheng Wang}
\authornotemark[1]
\affiliation{%
  \institution{Kuaishou Technology}
  \city{Beijing}
  \country{China}
  }
\email{wangshicheng@kuaishou.com}

\author{Shikun Li}
\authornote{Corresponding author.}
\affiliation{%
  \institution{Hong Kong Baptist University}
  \city{Hong Kong}
  \country{China}
  }
\email{shikunli.ml@gmail.com}

\author{Rui Huang}
\affiliation{%
  \institution{Kuaishou Technology}
  \city{Beijing}
  \country{China}
  }
\email{huangrui06@kuaishou.com}

\author{Xinghua Zhang}
\affiliation{%
\institution{Independent Researcher}
  \city{Beijing}
  \country{China}
  }
\email{zxh.zhangxinghua@gmail.com}

\author{Yunze Luo}
\affiliation{%
  \institution{Peking University}
  \city{Beijing}
  \country{China}
  }
\email{lyztangent@pku.edu.cn}

\author{Shipeng Li}
\affiliation{%
  \institution{Nanjing University}
  \city{Suzhou}
  \country{China}
  }
\email{shipengli.nju@gmail.com}


\author{Shiming Ge}
\affiliation{%
  \institution{Institute of Information Engineering, Chinese Academy of Sciences}
  \city{Beijing}
  \country{China}
  }
\email{geshiming@iie.ac.cn}

\author{Ruina Sun}
\affiliation{%
  \institution{Kuaishou Technology}
  \city{Beijing}
  \country{China}
  }
\email{sunruina@kuaishou.com}

\author{Yinjie Jiang}
\affiliation{%
  \institution{Kuaishou Technology}
  \city{Beijing}
  \country{China}
  }
\email{jiangyinjie@kuaishou.com}

\author{Jun Zhang}
\affiliation{%
  \institution{Kuaishou Technology}
  \city{Beijing}
  \country{China}
  }
\email{zhangjun08@kuaishou.com}

\renewcommand{\shortauthors}{Gaode Chen et al.}

\begin{abstract}

Implicit feedback is widely used in recommender systems due to its accessibility and generality, yet it usually presents noisy samples (e.g., clickbait, position bias). Meanwhile, recommenders inevitably face the item cold-start problem due to the continuous influx of new items. We identify that cold items are more prone to noisy samples due to the aforementioned factors, and researchers often overlook the significance of denoising implicit feedback for cold items. Previous denoising studies usually identify noisy samples based on heuristic patterns, such as higher loss values, and mitigate noise through sample selection or re-weighting. However, these methods have limited adaptability and are ineffective in cold-start scenarios. To achieve denoising implicit feedback for cold-start recommendation, we propose a model-agnostic denoising method called DIF. First, user preferences for content remain stable, which allows us to infer pseudo-labels indicating whether a user is interested in a cold item through content-similar warm items. We also elaborate on how to deploy industrial services to retrieve content-similar warm items for the cold item and obtain their collaborative representations for pseudo-labeling. Furthermore, to improve pseudo-label accuracy, we model the confidence of pseudo-labels based on the content similarity between the cold item and warm items, and then aggregate multiple pseudo-labels for each sample. Finally, we explicitly estimate the uncertainty of the noisy sample label by considering its relative entropy and the cold-start status of the item, which adaptively guides the role of pseudo-labels to correct the noisy labels at the sample level. DIF's superiority is supported by both theoretical justification and extensive experiments on real-world datasets. The method has been deployed on a billion-user scale short video application Kuaishou and has significantly improved various commercial metrics within cold-start scenarios.
\end{abstract}

\begin{CCSXML}
<ccs2012>
<concept>
<concept_id>10002951.10003317.10003347.10003350</concept_id>
<concept_desc>Information systems~Recommender systems</concept_desc>
<concept_significance>100</concept_significance>
</concept>
</ccs2012>
\end{CCSXML}

\ccsdesc[100]{Information systems~Recommender systems}
\keywords{Recommender System, Noisy Implicit Feedback, Cold-start}


\maketitle

\section{Introduction}

In the era of information explosion, recommender systems have been playing a critical role for mitigating information overload in various online applications such as short video~\cite{gong2022real,cai2023two} and E-commerce~\cite{zhou2018deep,pi2020search}. 
As the most successful technique for personalized recommender systems, collaborative filtering aims to predict items of interest to a specific user based on observed user–item interactions~\cite{ijcai2021p197,zhang2019deep}. 
While explicit user feedback (e.g., ratings) is the best fuel for recommender systems, its acquisition is often impeded by the need for active user participation. 
Hence, implicit feedback (e.g.,  view and click) generated during user browsing is exploited as a viable substitute due to its large volume and availability. 

However, the item cold-start problem~\cite{cao2022gift,zhou2023contrastive} may occur when the recommendation model faces a large volume of new items published daily. 
For example, tens of millions of new videos are uploaded every day on Kuaishou, one of the most popular short-video streaming platforms in China with hundreds of millions of active users. 
A small portion of popular items tend to obtain more accurate recommendations and more impressions, which creates a strong feedback loop and the Matthew effect, namely ``rich gets richer''~\cite{wang2023fresh}. 
Concretely, for a large amount of new emerging items with limited interactions, their embeddings are insufficiently trained, resulting in new items that may miss the opportunity to be recommended or be recommended to inappropriate users. 
Thus, the cold-start problem has become a crucial obstacle for online recommendation. 

As is well known, industrial recommendation systems inevitably contain noisy implicit feedback~\cite{zhao2024denoising,gao2022self}. 
However, in this work, we highlight an overlooked fact that cold-start items are more prone to having noisy implicit feedback.
We demonstrate various factors that generate noisy feedback on the Kuaishou platform to support our claim. 
On the one hand, users may be influenced by the clickbait issue, where curiosity about the cover or title of a short video leads to clicks or views, resulting in \textit{false positive samples}. 
Benefiting from the increasingly mature multi-modal content understanding in the industry, we find that cold-start videos in the top 10\% scoring range of clickbait metrics exceed those in the bottom 10\% scoring range by 37.7\%. 
On the other hand, users might overlook certain items due to position bias or scroll too fast during browsing fatigue, resulting in a lack of exposure to items and leading to \textit{false negative samples}. 
We analyze 1 million user sessions and observe that the proportion of cold-start videos placed in the tail three positions is 28.3\% higher than their proportion in the top three positions.
Hence, the implicit feedback on these short videos may not always indicate the actual satisfaction of users.  
In other words, cold-start videos are more likely to exhibit label noise due to these factors.

However, most existing work fails to recognize the importance of denoising implicit feedback for cold-start items. 
Industrial recommendation systems primarily focus on target user exploration for items in the cold-start phase, as these items lack well-trained representations. 
This often results in fewer positive feedback samples and more negative feedback samples.
Positive feedback serves as a critical signal for optimizing the representation of cold-start items in recommendation models. 
It guides these models to subsequently recommend these cold-start items to more suitable users, helping them accumulate more positive feedback and exposure.
Regardless of whether the false positive or false negative samples, it misleads recommendation models and wastes trial-and-error costs, as exposure opportunities are both limited and valuable.  
Such scenarios risk creating a feedback loop of inaccurate learning, which exacerbates the misrepresentation of cold-start items.
This not only leads to a decline in user experience but also undermines the motivation of producers, potentially causing a missed opportunity for an item to become a breakout success.
As such, denoising implicit feedback for cold-start recommendation becomes an imperative task. 

Considering the widespread use of implicit feedback and its significant impact on recommendation models, recent studies have noticed the importance of denoising implicit feedback. 
Existing efforts on tackling this problem can be roughly divided into two categories: sample selection methods and sample re-weighting methods.
Sample selection methods~\cite{wang2021denoising,gao2022self} focus on improving model performance by selecting clean samples and discarding noisy ones during training.
In contrast, sample re-weighting methods~\cite{hu2021next,wang2021denoising} aim to assign lower weights to interactions identified as noisy, thereby reducing their influence on the model’s learning process.
The success of these denoising techniques largely depends on the accuracy of distinguishing between clean and noisy samples. 
Consequently, various data patterns have been explored as noisy signals. 
For example, loss value is one of the most commonly used signals, as noisy interactions tend to exhibit higher loss values compared to clean ones~\cite{he2024double,wang2021denoising}. 
Additionally, other metrics, such as predicted scores~\cite{wang2022learning} and gradients~\cite{wang2023efficient} have also been investigated for identifying noisy samples. 
However, cold-start items typically inherently exhibit larger loss values, smaller predicted scores, and higher gradients in the recommendation model, 
making these denoising methods ineffective in cold-start scenarios.
Moreover, these denoising methods face challenges in integrating effectively with industrial recommendation models and applying them to online streaming training scenarios. 
As a result, existing denoising methods cannot be directly applied to industrial cold-start recommendation tasks.

To achieve the \textbf{D}enoising \textbf{I}mplicit \textbf{F}eedback for the cold-start recommendation, we propose a model-agnostic method \textbf{DIF}.
Designing such a method still faces many unknowns: 
(i) \textit{How to design a reasonable pseudo-labeling strategy for cold-start items?}
User interests in the content are generally stable. 
Thus, whether a user interacts with a cold-start item can be reasonably inferred through the warm items that are content-similar to the cold item. 
The interest representation of user and the collaborative representations of warm items are well-trained and can provide meaningful pseudo-labels. 
Moreover, we theoretically prove that even if content similarity between items cannot be fully guaranteed, pseudo-labels can still approximate the true label as long as the number of warm items is sufficient.
(ii) \textit{How to generate pseudo-labels for cold-start items in online streaming training?}
Retrieving content-similar warm items and their collaborative representations in industrial systems remains challenging due to the need for comprehensive candidate sets and real-time updates. 
We detail our practice experiences in applying our denoising method to online recommendation models.
(iii) \textit{How to further improve the accuracy of pseudo-labels?}
We not only consider the top-$k$ warm items most similar to the cold-start item to generate multiple pseudo-labels for aggregation, but also explicitly model the confidence of each pseudo-label based on content similarity during the pseudo-label aggregation process. 
(iv) \textit{How to correct the noisy sample label based on the final pseudo-label?}
We model the uncertainty of the sample label via relative entropy and cold-start status to adaptively guide the role of the pseudo-label during the label correction process.

To summarize, the key contributions are as follows:
\begin{itemize}[leftmargin=*]
\item We highlight that cold-start items are especially vulnerable to noisy implicit feedback, underscoring the need for denoising.
\item We propose a theoretically grounded method using multi-modal semantic similarity to generate pseudo-labels for cold items, along with practical deployment insights.
\item We incorporate confidence modeling in the pseudo-label aggregation process for higher accuracy, and explicitly measure label uncertainty during correction to adaptively control sample-level pseudo-label impact.
\item Extensive offline experiments on three real-world datasets validate the effectiveness of  DIF and its generalizability across different tasks. 
DIF has been deployed on a billion-user scale short video application Kuaishou, yielding significant improvement on a series of commercial metrics in cold-start scenarios.
\end{itemize}

\section{Methodology}
\subsection{Preliminary}

We give a formal description of the denoising implicit feedback for the cold-start recommendation task. 
Let $u \in \mathcal{U}$ and $i \in \mathcal{I}$ denote users and items, with the observed implicit feedback matrix $\tilde{\mathbf{Y}} \in \mathbb{R}^{|\mathcal{U}| \times |\mathcal{I}|}$.
And $\tilde{y}_{ui}$ $=$ $1$ means that the user interacted with the item, and $\tilde{y}_{ui}$ $=$ $0$ means no interaction. 
In previous work, the default assumption is that whenever $\tilde{y}_{ui}$ $=$ $1$, it means that the user is interested in the item. 
However, user implicit feedback may contain noise due to various factors (i.e., clickbait or position bias), resulting in noisy sample labels (\textit{false positive} or \textit{false negative}). 
Thus, our goal is to learn the noise-free representations of users and items from $\tilde{\mathbf{Y}}$, specifically for cold-start items.
In our application, a new or cold-start item is defined as a short video released in less than 24 hours (inclusive) and viewed less than 50,000 times. 

Industrial recommenders usually follow a multi-stage cascade architecture~\cite{zhang2023divide, gong2022real} (such as \textit{retrieval}~\cite{yang2020mixed}, \textit{pre-ranking} and \textit{ranking}~\cite{zhou2018deep}) to trade off accuracy and efficiency.
In theory, our method works for all the stages and is model-agnostic. 
In practice, we implement and deploy it based on the dual-tower model in the retrieval stage.
Dual-tower is the mainstream structure used by the industrial recommendation system in the retrieval stage.
Retrieval methods use a user tower and an item tower to transform user-side and item-side features into embeddings $\mathbf{e}_u$, $\mathbf{e}_i$ respectively.
The similarity between the user embedding $\mathbf{e}_u$ and the item embedding $\mathbf{e}_i$
is used to capture the relevance of user $u$ to item $i$ denoted by $\hat{y}_{ui}$.
During prediction, item embeddings ${\left\{\mathbf{e}_i\right\}}_{i \in \mathcal{I}}$ are calculated
beforehand in an offline or near-line system and indexed by the approximate nearest neighbor (ANN) search system, such as FAISS~\citep{johnson2019billion}. 
Thus, only the user tower needs to be forwarded in real-time and the ANN system can efficiently retrieve the nearest items in sub-linear time.

\subsection{Overview}

\begin{figure}[t]
\centering
\includegraphics[width=\linewidth]{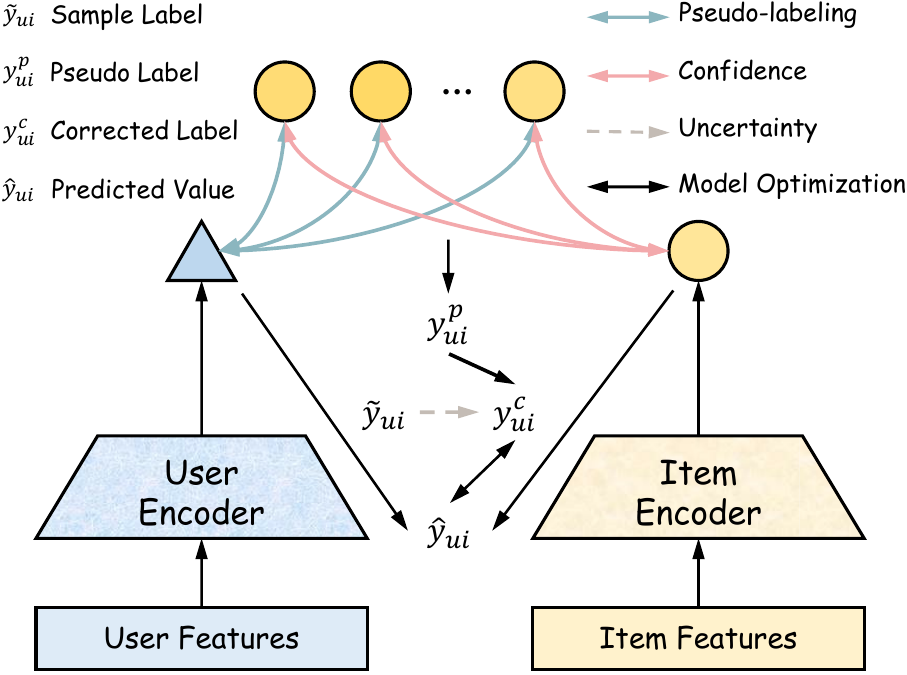}
\caption{The overview of our proposed DIF integrated with a dual-tower model.}
\label{method}
\end{figure}

\begin{figure*}[t]
\centering
\includegraphics[width=\linewidth]{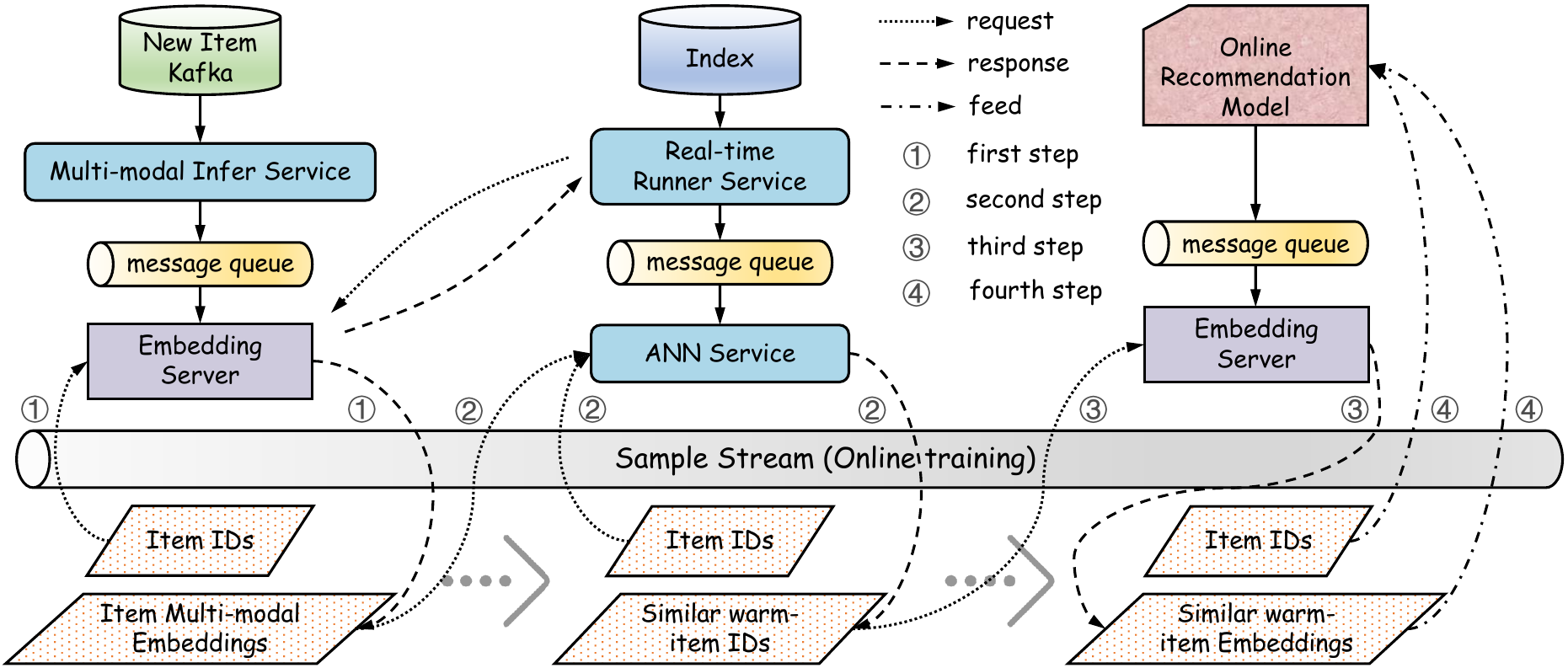}
\caption{The industrial practice for pseudo-labeling in online streaming training.}
\label{framework}
\end{figure*}

In Figure~\ref{method}, we present an overview of the proposed DIF integrated with a base recommendation model.
First, existing denoising methods in general cases typically perform pseudo-labeling based on the predictions from model~\cite{tanaka2018joint,arazo2019unsupervised}. 
However, due to representation issues, the predictions for cold-start items are often inaccurate. 
In contrast, the collaborative representations of some warm items are well-trained. 
Thus, we propose a novel denoising strategy for cold-start recommendation: generating pseudo-labels for the cold item based on the collaborative representations of content-similar warm items and user interest representation. 
This involves establishing services for retrieving content-similar warm items for the cold item and obtaining real-time collaborative representations of warm items. 
We also provide detailed practical experience.
Second, by retrieving the top-$k$ content-similar warm items for a cold item, we obtain $k$ pseudo-labels. 
We further model the confidence of pseudo-labels based on the content similarity, 
and then aggregate multiple pseudo-labels for each sample to improve accuracy.
Finally, we estimate sample label uncertainty via relative entropy and cold-start status to adaptively perform sample-level label correction based on the final pseudo-label.

\subsection{Pseudo-labeling}
\subsubsection{Practical Experience}
As shown in Figure~\ref{framework}, 
We address two key practical questions in our industrial environment: how to retrieve content-similar warm items for the cold item in sample stream and how to obtain collaborative representations of these warm items for subsequent pseudo-labeling.

First, for each newly uploaded item $i$ (short video), we can obtain its multi-modal content representation through the Multi-modal Inference Service and send it to the message queue. 
We have an Embedding Service that reads the content representation of new items from the message queue at all times and stores it.
For each item in the data stream, we can request the Embedding Service to query the content representation of the item (\textbf{step $\textcircled{1}$}). 

Moreover, we build an \textit{item-to-item} ANN service, where left \textit{items} are our queries and right \textit{items} are the warm items bucket we have constructed as candidates for retrieval.
To ensure sufficient coverage of warm item candidates, we utilize a Real-time Runner Service to continuously retrieve eligible warm items (e.g., videos with more than 50,000 views) from the Index, and request the Embedding Service in the previous step to obtain content representations of warm items, which are sent together to the message queue for continuous updating of ANN service candidates.
For the item $i$ and its content representations in the data stream, we feed them as input to request the ANN service (\textbf{step $\textcircled{2}$}), which quickly retrieves the top-$k$ IDs of warm items with content similar to item $i$, along with their similarity scores (i.e., $S_i$ $=$ $\left[s^1_i, s^2_i, ..., s^k_i\right]$) relative to it.

Furthermore, the online recommendation model actively sends the collaborative representations of users and items to a message queue. 
We establish an Embedding Service to store the collaborative representations of all items. 
Since an item (rather than a sample) may appear multiple times in the sample stream, the online recommendation model continuously updates its representation and sends it to the message queue to update the embedding storage in real time.
We feed the warm items IDs to query this Embedding Service, obtaining their collaborative representations $\mathbf{E}_i$ $=$ $\left[\mathbf{e}^1_i, \mathbf{e}^2_i, ..., \mathbf{e}^k_i\right]$ for subsequent pseudo-label generation (\textbf{step $\textcircled{3}$}). 

At this point, we obtain the collaborative representations $\mathbf{E}_i$ of the top-$k$ warm items with content similar to each item $i$ in the data stream, which can be used for pseudo-labeling in the online recommendation model (\textbf{step $\textcircled{4}$}). 

\subsubsection{Generation of Pseudo Labels}

The user interest and the collaborative representations of warm items can be considered well-trained.
Therefore, predicting interactions between the user and warm items provides valuable reference for predicting whether the user will interact with the content-similar cold-start item. 
For a sample $\left(u,i,\tilde{y}_{ui}\right)$, a dual-tower model typically predicts the interaction probability between user $u$ and item $i$ by computing the inner product between $\mathbf{e}_u$ and $\mathbf{e}_i$. 
Thus, to ensure homogeneity between pseudo-labels and the sample label, we also take the inner product to generate pseudo-labels. 
For $k$ warm items that have similar content to item $i$ in a sample, we obtain $k$ pseudo-labels as follows: 
\begin{align}
    \left[y_{ui,1}^{p}, y_{ui,2}^{p}, ..., y_{ui,k}^{p}\right] = \mathbf{e}_u \cdot \left[\mathbf{e}^1_i, \mathbf{e}^2_i, ..., \mathbf{e}^k_i\right]\ ,
\label{eq-pseudo-label}
\end{align}
where $\cdot$ denotes inner product.

\subsection{Confidence Modeling}
Although we later theoretically show that our method can approximate the true labels without strong similarity assumptions between cold and warm items, different similarity levels should still reflect different degrees of guidance strength.
Thus, it is intuitive to model the confidence of each pseudo-label based on the similarity between the target item and warm items. 
As shown below, after obtaining normalized confidence scores $c_i$, we compute a weighted sum to derive a more accurate pseudo-label $y_{ui}^{p}$: 

\begin{align}
&c_i^j = \frac{{\rm exp}\left(s_i^j / \tau\right)}{\sum_{z = 1}^{k} {\rm exp}\left(s_i^z / \tau\right)}, j = 1, 2, ..., k\ ,
\\
&y_{ui}^{p} = \sum_{j = 1}^{k} c_i^j  y_{ui,j}^{p}
\label{eq-confidence}\ .
\end{align}
Here, $\tau$ is a temperature coefficient. Benefiting from the existing advanced multi-modal content understanding capabilities, online observations indicate that similarity scores are generally above 0.9, and $\tau$ serves to enhance differentiation.

\subsection{Uncertainty Estimation}

To effectively perform label correction, we estimate‌ the uncertainty of the noisy sample label $\tilde{y}_{ui}$ from two perspectives to adaptively guide the role of pseudo-labels $y_{ui}^{p}$ at the sample level.

First, we consider the relative entropy~\cite{cover1991entropy} of the sample label with the predicted value (e.g., $\hat{y}_{ui}$ $=$ $\left[\mathbf{e}_u\right]^\top \cdot \mathbf{e}_i$) as a crucial indicator of the uncertainty, termed as $r_{ui}$. 
The higher the relative entropy, the more likely the sample is mislabeled. Its calculation is as below:

\begin{align}
r_{ui} = -\tilde{y}_{ui} \log \hat{y}_{ui} - \left(1 - \tilde{y}_{ui}\right) \log \left(1 - \hat{y}_{ui}\right)
\label{eq-cross-entropy}\ .
\end{align}

Second, as previously mentioned, cold-start items are more prone to generating label noise due to various factors. 
Thus, we explicitly model the cold-start state $\gamma_i$ of an item $i$ as another indicator for the uncertainty estimation:
\begin{align}
\gamma_i = e^{- \alpha  g_i}\ ,
\label{eq-cold-start}
\end{align}
where $e$ denotes the natural constant, 
$g_i$ represents the interaction count of the item $i$, and $\alpha$ controls the threshold for defining the cold-start state. 
A higher interaction count drives $\gamma_i$ closer to 0, while fewer interactions push it closer to 1. 
This design effectively prevents any adverse impact on the training of warm items.

Then, by integrating relative entropy $r_{ui}$ and cold-start status $\gamma_i$, we assign higher uncertainty to cold-start items with significant prediction discrepancies. 
The final uncertainty estimation $t_{ui}$ is formulated as shown below and constrained to be less than 1.
\begin{align}
t_{ui} = r_{ui} \gamma_i \ .
\label{eq-ue}
\end{align}

\subsection{Model Optimization}

To alleviate the impact of label noise, we adaptively correct the sample label based on its uncertainty at the sample level. 
Specifically, taking the weight of the original sample label as 1, we normalize the uncertainty by scaling it within the range by Eq.~(\ref{eq-weight}), and then we obtain 
the corrected soft label $y_{ui}^{c}$ for each sample by Eq.~(\ref{eq-revision}). 
\begin{align}
& w = \frac{t_{ui}}{t_{ui} + 1}\ ,
\label{eq-weight}
\\
& y_{ui}^{c} = w  y_{ui}^{p} + (1 - w)  \tilde{y}_{ui}\ .
\label{eq-revision}
\end{align}

Finally, we utilize the widely-used cross-entropy loss for training:
\begin{align}
\mathcal{L} = -\sum_{(u,i) \in \mathcal{D}_{\text{train}}}\left(y_{ui}^{c} \log \hat{y}_{ui} + \left(1 - y_{ui}^{c}\right) \log \left(1 - \hat{y}_{ui}\right)\right)\ .
\label{eq-loss}
\end{align}

\subsection{Deployment}

Our proposed DIF is trained on a large-scale distributed nearline learning system of Kuaishou. 
Every day, hundreds of millions of users visit Kuaishou, watch and interact with short videos, and yield tens of billions watch and interaction logs per day. 
On the one hand, each log is collected in real-time, preprocessed by the Kafka stream processing platform, and produces the sample stream. 
In Section 2.3, we provide a detailed practice experience of how to obtain the collaborative representations of content-similar warm items for each item in the sample stream. 
As training data, including the aforementioned features, is received by the downstream recommendation model through the pipeline, this near-line training system incrementally updates the model parameters using the latest knowledge from user-video interactions.
In streaming training, our approach effectively minimizes the impact on the training of warm items by leveraging uncertainty estimation while focusing on assisting cold-start items in correcting noisy labels during the early stages, thereby enabling higher growth potential. 

\subsection{Theoretical Justification}
In this section, we present the theoretical analyses to justify the validity of our denoising approach. 

Let $e_i \in \mathcal{E} \subset \mathbb{R}^d$ be the normalized multi-modal embedding of a cold-start item $i$. 
Let $\mathcal{N}_k(i)$  be the set of $k$ nearest warm items retrieved based on context similarity. 
Let $\eta^*_{ui}=\eta^*_{u}(\mathbf{e}_i)$ be the ground-truth interaction probability between user $u$ and item $i$.  Let ${y}^p_{ui,j}$ be the pseudo label between user $u$ and warm neighbor item $j \in \mathcal{N}_k(i)$.  Let ${y}^p_{ui}$ be the confidence-weighed pseudo label between user $u$ and cold-start item $i$. 
\newtheorem{Assumption}{Assumption}
\numberwithin{Assumption}{section}

\begin{Assumption}
The observed cold-start item label  $\tilde{y}_{ui}$ is a biased estimator of its true interaction probability $\eta^*_{ui}$, i.e., $\tilde{y}_{ui} = \eta^*_{ui} + \epsilon_{ui}$, where $\mathbb{E}[\epsilon_{ui}] =  b_{ui}$ and $\text{Var}[\epsilon_{ui}] = \sigma^2_{ui}$.
    \label{assum1}
\end{Assumption}
	{Remark~1.} Assumption~\ref{assum1} means that due to various factors, such as clickbait issues and position bias, the observed cold-start item label has undeniable  systematic bias noise $b_{ui}$ and relatively large noise variance  $\sigma^2_{ui}$.

    
\begin{Assumption}
The pseudo warm item label  ${y}^p_{ui,j}$ is an unbiased estimator of  its true interaction probability $\eta^*_{uj}$, i.e., ${y}^p_{ui,j} = \eta^*_{uj} + \epsilon'_{ui,j}$, where $\mathbb{E}[\epsilon'_{ui,j}] =  0$ and $\text{Var}[\epsilon'_{ui,j}] = \sigma'^2_{u}$.
    \label{assum2}
	\end{Assumption}
    
	{Remark~2.} Since the user interest representation and collaborative representations of warm items can be considered well-trained with a sufficiently large sample, Assumption~\ref{assum2} posits  that the pseudo warm item label has negligible approximation error ($\mathbb{E}[\epsilon'_{ui,j}]$) and small variance $\sigma'^2_{u}$.
    
  \begin{Assumption} The context representation space $\mathcal{E}$ is a compact subset of $\mathbb{R}^d$, and the probability density function $p(\mathbf{e})$ is bounded away from zero on $\mathcal{E}$, i.e., there exists a constant $p_{\inf} > 0$ such that $p(\mathbf{e}) \geq p_{\inf}$ for all $\mathbf{e} \in \mathcal{E}$.
\label{assum3}
\end{Assumption}
  \begin{Assumption} The true interaction probability function $\eta^*_{u}(\mathbf{e})$ is $\alpha$-Hölder continuous on the content representations space, i.e., there exists $0<\alpha<1$ and $L>0$ such that $|\eta_u^*(\mathbf{e}_i) - \eta_u^*(\mathbf{e}_j)| \leq L \|\mathbf{e}_i - \mathbf{e}_j\|^\alpha$, for any $\mathbf{e}_i,\mathbf{e}_j \in \mathcal{E}$.
    \label{assum4}
	\end{Assumption}
   {Remark~3.} Assumption~\ref{assum3} ensures valid nearest neighbors exist within a small radius around any item when having a sufficiently large sample, and Assumption~\ref{assum4} makes a smoothness assumption on $\eta^*_{u}(\mathbf{e})$. Both of them are standard conditions for the analysis under k-NN classification~\cite{bahri2020deep,gao2018consistency,gyorfi2002distribution}.
    
	\begin{theorem} 
    Suppose Assumptions~\ref{assum2}, ~\ref{assum3}, and~\ref{assum4} hold.  For any cold-start item $i \in \mathcal{I}$, user $u$ $\in$ $\mathcal{U}$, $\delta\in (0,1)$, 
    we have
		\label{thm1}
			$$|y^p_{ui} - \eta^*_{ui}| \leq \mathcal{O}\left( L \left(\frac{k}{N_{warm}}\right)^{\frac{\alpha}{d}} \right) + \mathcal{O}\left(\frac{\sigma'_{u}}{\sqrt{k}}\right),$$
            where $k$ is the number of context-similar neighbors, $N_{warm}$ is the number of warm items, $d$ is the dimension of the context space $\mathcal{E}$.
	\end{theorem}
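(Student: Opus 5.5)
Since the confidence weights $c_i^j$ are nonnegative and sum to one, I will write $y^p_{ui}-\eta^*_{ui}=\sum_{j}c_i^j(y^p_{ui,j}-\eta^*_{ui})$. By Assumption~\ref{assum2}, each term splits as $y^p_{ui,j}-\eta^*_{ui}=(\eta^*_{uj}-\eta^*_{ui})+\epsilon'_{ui,j}$. The triangle inequality then gives
\begin{align*}
|y^p_{ui}-\eta^*_{ui}|\le \sum_{j\in\mathcal{N}_k(i)} c_i^j\,|\eta^*_u(\mathbf{e}_j)-\eta^*_u(\mathbf{e}_i)| + \Bigl|\sum_{j\in\mathcal{N}_k(i)} c_i^j\,\epsilon'_{ui,j}\Bigr| ,
\end{align*}
a bias term plus a stochastic term. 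I will bound each separately, as in the standard $k$-NN regression analysis of \cite{gyorfi2002distribution,bahri2020deep}.

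\textbf{Bias term.} By the Hölder condition (Assumption~\ref{assum4}), the bias term is at most $L\,r_k(i)^{\alpha}$, where $r_k(i)$ is the distance from $\mathbf{e}_i$ to its $k$-th nearest warm neighbor; this uses $\sum_j c_i^j=1$. To control $r_k(i)$, I treat the $N_{warm}$ warm embeddings as i.i.d.\ draws from $p$. Assumption~\ref{assum3} makes $\mathcal{E}$ compact with $p\ge p_{\inf}$, so a ball of radius $r$ around $\mathbf{e}_i$ has mass at least $c\,p_{\inf} r^d$. The constant $c>0$ comes from a regularity condition on $\mathcal{E}$: the set should not be too thin near its boundary. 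I would add this as the usual implicit assumption. I then choose $r^\ast=\bigl(2k/(c\,p_{\inf}N_{warm})\bigr)^{1/d}$, so that the expected number of warm points in $B(\mathbf{e}_i,r^\ast)$ is at least $2k$. A multiplicative Chernoff bound shows that, with probability at least $1-e^{-k/4}$, this ball contains at least $k$ points, and hence $r_k(i)\le r^\ast$. For this probability to exceed $1-\delta/2$, I need $k\gtrsim\log(1/\delta)$. On this event the bias term is $\mathcal{O}\bigl(L(k/N_{warm})^{\alpha/d}\bigr)$.

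\textbf{Stochastic term.} The errors $\epsilon'_{ui,j}$ have mean zero and variance $\sigma'^2_u$, and I will treat them as independent across neighbors. The weighted sum then has variance $\sigma'^2_u\sum_j (c_i^j)^2$. The temperature softmax is applied to similarities that are all close to each other (the paper reports they are all above $0.9$), so the weights are near-uniform. I will assume $\max_j c_i^j\le C/k$, which gives $\sum_j (c_i^j)^2\le C/k$. Chebyshev's inequality, or Bernstein's inequality under sub-Gaussian errors to get a $\log(1/\delta)$ rather than $1/\delta$ dependence, then bounds the stochastic term by $\mathcal{O}(\sigma'_u/\sqrt{k})$ with probability at least $1-\delta/2$. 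A union bound combines the two events and yields the stated inequality with probability at least $1-\delta$. The dependence on $\delta$ is hidden inside the $\mathcal{O}(\cdot)$, which is how I will read the statement.

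\textbf{Main obstacle.} The hard step is the neighbor-radius bound. The theorem as written gives neither the i.i.d.\ sampling model for warm items nor the boundary-regularity constant, and I must supply both explicitly. A smaller point is that the neighbors $\mathcal{N}_k(i)$ are chosen using the embeddings, so the noise $\epsilon'$ must be conditioned on the selected set. I will handle this by assuming the noise is independent of the content embeddings, which is natural because pseudo-label error comes from collaborative training, not from content.
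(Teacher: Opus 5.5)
Your proposal is correct and follows essentially the paper's route: the same split of $y^p_{ui}-\eta^*_{ui}$ into a bias term and a noise term via $\sum_j c_i^j=1$, the H\"older bound by the $k$-th nearest-neighbor radius, and a concentration bound on $\sum_j c_i^j\epsilon'_{ui,j}$ with effective sample size $1/\sum_j (c_i^j)^2\approx k$. The paper cites Theorem 6.2 of \cite{gyorfi2002distribution} for the radius bound and justifies $k_{eff}\approx k$ only by an informal concentration-of-measure remark; you instead prove the radius bound directly by Chernoff under explicit i.i.d.\ and boundary-regularity assumptions, and you state the near-uniform-weight condition explicitly---a condition that in fact holds automatically, since softmax weights satisfy $\max_j c_i^j\le e^{(\max_z s_i^z-\min_z s_i^z)/\tau}/k$.
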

\begin{proof}
Under Assumption~\ref{assum2}, we first decompose the error as follows:
\begin{equation}
\begin{aligned}
|y^p_{ui} - \eta^*_{ui}| &=\left| \sum_{j \in \mathcal{N}_k(i)} c_{i}^j \tilde{y}_{ui,j} - \eta^*_{ui} \right| = \left| \sum_{j \in \mathcal{N}_k(i)} c_{i}^j (\tilde{y}_{ui,j} - \eta^*_{ui}) \right| \\
&= \left| \sum_{j \in \mathcal{N}_k(i)} c_{i}^j (\eta^*_{uj} + \epsilon'_{ui,j} - \eta^*_{ui}) \right| \\
&\leq {\left| \sum_{j \in \mathcal{N}_k(i)} c_{i}^j (\eta^*_{uj} - \eta^*_{ui}) \right|} + {\left| \sum_{j \in \mathcal{N}_k(i)} c_{i}^j \epsilon'_{ui,j} \right|}.
\end{aligned}
\label{eq1}
\end{equation}

For the first term ${\left| \sum_{j \in \mathcal{N}_k(i)} c_{i}^j (\eta^*_{uj} - \eta^*_{ui}) \right|}$, it can be further decomposed under Assumption~\ref{assum4}:
\begin{equation}
\left| \sum_{j \in \mathcal{N}_k(i)} c_{i}^j (\eta^*_{uj} - \eta^*_{ui}) \right| \leq \sum_{j \in \mathcal{N}_k(i)} c_{i}^j L \|\mathbf{e}_j - \mathbf{e}_i\|^\alpha \leq L \cdot \max_{j \in \mathcal{N}_k(i)} \|\mathbf{e}_j - \mathbf{e}_i\|^\alpha.
\end{equation}
According to non-parametric regression theory (See Theorem 6.2 in~\cite{gyorfi2002distribution}), in a $d$-dimensional space with $N_{warm}$ samples and a probability density function bounded away from zero (Assumption~\ref{assum3}), the distance between a target point and its $k$-th nearest neighbor $\max_{j \in \mathcal{N}_k(i)} \|\mathbf{e}_j - \mathbf{e}_i\|$ is bounded by $\mathcal{O}((k/N_{warm})^{1/d})$. Substituting this into the above inequality, we obtain: 
\begin{equation}
\left| \sum_{j \in \mathcal{N}_k(i)} c_{i}^j (\eta^*_{uj} - \eta^*_{ui}) \right| \leq \mathcal{O}\left( L \left(\frac{k}{N_{warm}}\right)^{\frac{\alpha}{d}} \right).
\label{eq2}
\end{equation}

For the second term ${\left| \sum_{j \in \mathcal{N}_k(i)} c_{i}^j \epsilon'_{ui,j} \right|}$, by assuming $\epsilon'_{ui,j}$ represents bounded independent zero-mean random noise with variance (Assumption~\ref{assum2}), by applying Hoeffding's Inequality~\cite{Boucheron2013Concentration}, we have:
\begin{equation}
{\left| \sum_{j \in \mathcal{N}_k(i)} c_{i}^j \epsilon'_{ui,j} \right|} \leq \mathcal{O}\left(\frac{\sigma'_{u}}{\sqrt{k_{eff}}}\right) \approx \mathcal{O}\left(\frac{\sigma'_{u}}{\sqrt{k}}\right),
\label{eq3}
\end{equation}
where $k_{eff} = 1 / \left(\sum \left(c_{i}^j\right)^2\right)$, and in the high-dimensional spaces with dense sampling, $k_{eff} \approx k$ due to the concentration of measure. 

Finally, substituting Eq.~(\ref{eq2}) and Eq.~(\ref{eq3}) into Eq.~(\ref{eq1}), we can obtain:
\begin{equation}
\left|y^p_{ui} - \eta^*_{ui}\right| \leq \mathcal{O}\left( L \left(\frac{k}{N_{warm}}\right)^{\frac{\alpha}{d}} \right) + \mathcal{O}\left(\frac{\sigma'_{u}}{\sqrt{k}}\right)
\end{equation}
\end{proof}

{Remark~4.} Theorem~\ref{thm1} clearly shows that, when the number of warm items $N_{warm}$ is  sufficiently large, by choosing a suitable $k$, the error bound between $y^p_{ui}$ and $ \eta^*_{ui}$ will become small, which clearly justifies the cleanness of the pseudo labels $y^p_{ui}$.

\begin{theorem} Suppose Assumptions~\ref{assum1} and~\ref{assum2} hold.  Assume the observation noise $\epsilon_{ui}$ and the pseudo-label noise $\epsilon'_{ui,j}$ are uncorrelated. There exists an optimal weight $w^* \in [0, 1]$ that minimizes the MSE risk $R(y^c_{ui}) = \mathbb{E}[(y^c_{ui} - \eta^*_{ui})^2]$. Furthermore, the optimal weight is $$w^* = \frac{\lambda_{ui}}{\lambda_{ui} + 1},\text{and}~ \lambda_{ui} = \frac{R(\tilde{y}_{ui})}{R(y^p_{ui})},$$ 
where $\lambda_{ui}$ represents the noise ratio of $\tilde{y}_{ui}$ to $y^p_{ui}$.
\label{thm2}
\end{theorem}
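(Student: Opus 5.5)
The plan is to treat $w$ as a deterministic scalar and write the corrected label's error as a convex combination of the two estimators' errors. From Eq.~(\ref{eq-revision}), $y^c_{ui}-\eta^*_{ui} = w\,(y^p_{ui}-\eta^*_{ui}) + (1-w)\,(\tilde{y}_{ui}-\eta^*_{ui})$. Set $A = y^p_{ui}-\eta^*_{ui}$ and $B=\tilde{y}_{ui}-\eta^*_{ui}=\epsilon_{ui}$. Expanding the square gives
\begin{equation*}
R(y^c_{ui}) = w^2\,\mathbb{E}[A^2] + (1-w)^2\,\mathbb{E}[B^2] + 2w(1-w)\,\mathbb{E}[AB],
\end{equation*}
where $\mathbb{E}[A^2]=R(y^p_{ui})$ and $\mathbb{E}[B^2]=R(\tilde{y}_{ui})=b_{ui}^2+\sigma_{ui}^2$ by Assumption~\ref{assum1}.

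The key step is to show the cross term vanishes. Decompose $A$ as in the proof of Theorem~\ref{thm1}: $A = \sum_j c_i^j(\eta^*_{uj}-\eta^*_{ui}) + \sum_j c_i^j\epsilon'_{ui,j}$. The first piece is deterministic (conditional on the retrieved neighbours). The second has zero mean by Assumption~\ref{assum2} and is uncorrelated with $\epsilon_{ui}$ by hypothesis, so it contributes nothing to $\mathbb{E}[AB]$.

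The deterministic bias $\beta_{ui}=\sum_j c_i^j(\eta^*_{uj}-\eta^*_{ui})$, however, contributes $\beta_{ui} b_{ui}$, which is not zero in general. This is the main obstacle. I would handle it in one of two ways. The first option is to invoke Theorem~\ref{thm1} to argue that $\beta_{ui}=\mathcal{O}(L(k/N_{warm})^{\alpha/d})$ is negligible, so $\mathbb{E}[AB]\approx 0$. The second option is to interpret ``uncorrelated'' as applying to the full errors $A$ and $B$, i.e.\ $\mathbb{E}[AB]=0$, which is the reading I would adopt to obtain the exact formula, noting the approximation in a remark.

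With $\mathbb{E}[AB]=0$, the risk $R(w)=w^2R(y^p_{ui})+(1-w)^2R(\tilde{y}_{ui})$ is a strictly convex quadratic in $w$ whenever $R(y^p_{ui})+R(\tilde{y}_{ui})>0$. Setting $R'(w)=2wR(y^p_{ui})-2(1-w)R(\tilde{y}_{ui})=0$ gives
\begin{equation*}
w^*=\frac{R(\tilde{y}_{ui})}{R(\tilde{y}_{ui})+R(y^p_{ui})}=\frac{\lambda_{ui}}{\lambda_{ui}+1}.
\end{equation*}
The last equality follows by dividing numerator and denominator by $R(y^p_{ui})$, assumed positive; if it is zero, take $w^*=1$ directly.

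Since both risks are nonnegative, $w^*\in[0,1]$. Being the stationary point of a convex function on $\mathbb{R}$, it is the global minimiser, and in particular the minimiser over $[0,1]$, which establishes existence.

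Finally, I would remark that this mirrors the paper's correction rule in Eq.~(\ref{eq-weight}), with $t_{ui}$ playing the role of a surrogate for $\lambda_{ui}$.
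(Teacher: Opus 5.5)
Your proposal is correct and takes the same route as the paper: expand the MSE of $w\,y^p_{ui}+(1-w)\tilde{y}_{ui}$, drop the cross term, and solve the first-order condition to get $w^*=R(\tilde{y}_{ui})/(R(\tilde{y}_{ui})+R(y^p_{ui}))=\lambda_{ui}/(\lambda_{ui}+1)$. Your version is more careful than the paper's proof, which only asserts that the cross term vanishes because the noises are uncorrelated: you correctly show the cross term is $\beta_{ui}b_{ui}$ in general, so the hypothesis must be read as orthogonality $\mathbb{E}[(y^p_{ui}-\eta^*_{ui})(\tilde{y}_{ui}-\eta^*_{ui})]=0$ rather than zero covariance, and you also supply the convexity, $w^*\in[0,1]$, and $R(y^p_{ui})=0$ checks that the paper omits.
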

\begin{proof}
To minimize the Mean Squared Error (MSE) of the corrected label $y^c_{ui}$ with respect to the true probability $\eta^*_{ui}$:
\begin{equation}
\begin{aligned}
R(y^c_{ui}) & = \mathbb{E}\left[ \left( w y^p_{ui} + (1-w) \tilde{y}_{ui} - \eta^*_{ui} \right)^2 \right] \\
&=  w^2 \mathbb{E}[(y^p_{ui} - \eta^*_{ui})^2] + (1-w)^2 \mathbb{E}[(\tilde{y}_{ui} - \eta^*_{ui})^2] \\
&+ 2w(1-w) \mathbb{E}[(y^p_{ui} - \eta^*_{ui})(\tilde{y}_{ui} - \eta^*_{ui})] \\
&=  w^2 R(y^p_{ui})+ (1-w)^2 R(\tilde{y}_{ui}),
\end{aligned}
\end{equation}
where the last equation holds because we assume the observation noise and the pseudo-label noise  are uncorrelated.
Taking the derivative with respect to $w$ and setting it to zero:
$$2w R(y^p_{ui}) - 2(1-w) R(\tilde{y}_{ui}) = 0.$$
Solving for the optimal weight:
$$w^* = \frac{R(\tilde{y}_{ui})}{R(\tilde{y}_{ui}) + R(y^p_{ui})} = \frac{\lambda_{ui}}{\lambda_{ui}+1}.$$
\end{proof}
{Remark~5.} Theorem~\ref{thm2} derives an optimal weight $w^* = \frac{\lambda_{ui}}{\lambda_{ui} + 1}$, which mathematically aligns with the weighting scheme proposed in Eq.~(\ref{eq-weight}). Since $R(\tilde{y}_{ui})$ and $R(y^p_{ui})$ are unknown, we utilize the uncertainty term $t_{ui} $ to measure the observation noise level by the relative entropy $r_{ui}$ and cold-start status $\gamma_i$. It can be regarded as an effective estimator of the noise ratio for denoising.

\section{Experiments}
\subsection{Experimental Settings}
\subsubsection{Dataset}
\begin{table}[t]
\caption{Statistics of experimented datasets with multi-modal item Visual(V), Acoustic(A), Textual(T) contents.}
\begin{tabular}{cccccccc}
\toprule
\multirow{2}{*}{Dataset} 
    & \multicolumn{4}{c}{Amazon} 
    & \multicolumn{3}{c}{\multirow{2}{*}{Tiktok}} \\ 
\cmidrule{2-5}
    & \multicolumn{2}{c}{Sports} 
    & \multicolumn{2}{c}{Baby} 
    & \multicolumn{3}{c}{} \\ 
\midrule
Modality 
    & V              & T           
    & V              & T           
    & V              & A           
    & T \\
Embed Dim                
    & 4096           & 1024        
    & 4096           & 1024        
    & 128            & 128         
    & 768 \\ 
\midrule
User                     
    & \multicolumn{2}{c}{35,598}   
    & \multicolumn{2}{c}{19,445}   
    & \multicolumn{3}{c}{9,319} \\
Item                     
    & \multicolumn{2}{c}{18,357}   
    & \multicolumn{2}{c}{7,050}    
    & \multicolumn{3}{c}{6,710} \\
Interactions             
    & \multicolumn{2}{c}{256,308}  
    & \multicolumn{2}{c}{139,110}  
    & \multicolumn{3}{c}{59,541} \\ 
\midrule
Sparsity                 
    & \multicolumn{2}{c}{99.961\%} 
    & \multicolumn{2}{c}{99.899\%} 
    & \multicolumn{3}{c}{99.904\%} \\ 
\bottomrule
\end{tabular}
\label{datasets}
\end{table}

Following~\cite{chen2024multi}, we conduct offline experiments on three real-world multi-modal recommendation datasets~\footnote{All datasets are publicly available at \url{https://github.com/HKUDS/MMSSL/tree/main}.}, i.e., Amazon-Sports, Amazon-Baby, Tiktok. 
Data statistics with multi-modal feature embedding dimensionality are reported in Table \ref{datasets}.
\begin{itemize}[leftmargin=*]
\item \textbf{Amazon.} We adopt two benchmark datasets from Amazon with two item categories Baby and Sports. 
In those datasets, textual feature embeddings are generated via Sentence-Bert~\cite{reimers2019sentence} based on the extracted text from the product title, description, brand, and categorical information. 
The product images are used to generate 4096-$d$ visual feature embeddings of items.
\item \textbf{TikTok.} This data is collected from TikTok platform to log the viewed short-videos of users. 
The multi-modal features are visual, acoustic, and title textual features of videos. 
The textual embeddings are also encoded with Sentence-Bert. 
\end{itemize}

\subsubsection{Evaluation Protocols}

For each dataset, we used the ratio 8:1:1 to randomly split the historical interactions of each user and constituted the training set, validation set, and testing set. 
Moreover, following the widely-used evaluation metrics~\cite{ijcai2021p197, wei2023multi}, we adopted Precision@N, Recall@N, and NDCG@N to evaluate the performance of methods. 
By default, we set N = 20 and reported the average values of the three metrics for all users in the testing set.

\begin{table*}[t]
\centering
\caption{Performance comparison of baselines on different datasets in terms of Recall@20 and NDCG@20. The best performance is highlighted in \textbf{bold} and the second to best is highlighted by \underline{underlines}.}
\label{main_performance}
\begin{tabular}{c|c|c|cc|cc|cc}
\toprule
\multicolumn{1}{c|}{\multirow{2}{*}{}} & \multicolumn{1}{c|}{\multirow{2}{*}{Base Model}} & \multicolumn{1}{c|}{\multirow{2}{*}{Method}} & \multicolumn{2}{c|}{Amazon-Sports} & \multicolumn{2}{c|}{Amazon-Baby} & \multicolumn{2}{c}{Tiktok} \\
& \multicolumn{1}{c|}{} & \multicolumn{1}{c|}{} & Recall@20 & \multicolumn{1}{c|}{NDCG@20} & Recall@20 & \multicolumn{1}{c|}{NDCG@20} & Recall@20 & \multicolumn{1}{c}{NDCG@20} \\ \midrule

\multicolumn{1}{c|}{\multirow{15}{*}{Cold}} & \multicolumn{1}{c|}{\multirow{5}{*}{NeuMF}} & Normal & 0.00244 & 0.00228 & 0.00142 & 0.00157 & 0.00104 & 0.00010 \\
& \multicolumn{1}{c|}{} & RINCE & 0.00271 & 0.00222 & \underline{0.00197} & \underline{0.00172} & \underline{0.00674} & \textbf{0.00298} \\
& \multicolumn{1}{c|}{} & DECL & 0.00266 & 0.00191 & 0.00186 & 0.00149 & 0.00622 & \underline{0.00234} \\
& \multicolumn{1}{c|}{} & MWUF & \underline{0.00302} & \underline{0.00250} & 0.00166 & 0.00158 & 0.00052 & 0.00013 \\
& \multicolumn{1}{c|}{} & Ours & \textbf{0.00461} & \textbf{0.00259} & \textbf{0.00385} & \textbf{0.00188} & \textbf{0.00881} & 0.00227 \\ \cmidrule{2-9}
& \multicolumn{1}{c|}{\multirow{5}{*}{LightGCN}} & Normal & 0.00191 & 0.00192 & 0.00143 & 0.00122 & 0.00106 & 0.00026 \\
& \multicolumn{1}{c|}{} & RINCE & 0.00266 & 0.00294 & \underline{0.00258} & \underline{0.00248} & \underline{0.00415} & \textbf{0.00125} \\
& \multicolumn{1}{c|}{} & DECL & \underline{0.00277} & \underline{0.00306} & 0.00179 & 0.00168 & 0.00231 & 0.00065 \\
& \multicolumn{1}{c|}{} & MWUF & 0.00253 & 0.00222 & 0.00186 & 0.00161 & 0.00155 & 0.00050 \\
& \multicolumn{1}{c|}{} & Ours & \textbf{0.00325} & \textbf{0.00341} & \textbf{0.00303} & \textbf{0.00261} & \textbf{0.00518} & \underline{0.00117} \\ \cmidrule{2-9}
& \multicolumn{1}{c|}{\multirow{5}{*}{SimGCL}} & Normal & 0.00218 & 0.00210 & 0.00180 & 0.00184 & 0.00155 & 0.00040 \\
& \multicolumn{1}{c|}{} & RINCE & 0.00293 & 0.00312 & \underline{0.00285} & \underline{0.00289} & \underline{0.00466} & \underline{0.00101} \\
& \multicolumn{1}{c|}{} & DECL & 0.00317 & \underline{0.00361} & 0.00219 & 0.00184 & 0.00259 & 0.00072 \\
& \multicolumn{1}{c|}{} & MWUF & \underline{0.00319} & 0.00270 & 0.00236 & 0.00227 & 0.00207 & 0.00060 \\
& \multicolumn{1}{c|}{} & Ours & \textbf{0.00347} & \textbf{0.00379} & \textbf{0.00350} & \textbf{0.00356} & \textbf{0.00570} & \textbf{0.00154} \\ \midrule

\multicolumn{1}{c|}{\multirow{15}{*}{Warm}} & \multicolumn{1}{c|}{\multirow{5}{*}{NeuMF}} & Normal & 0.04535 & 0.02134 & 0.05534 & 0.02456 & 0.10095 & 0.04033 \\
& \multicolumn{1}{c|}{} & RINCE & \textbf{0.06619} & \underline{0.02755} & 0.05435 & 0.02220 & 0.09690 & 0.03936 \\
& \multicolumn{1}{c|}{} & DECL & 0.06073 & 0.02595 & 0.05487 & 0.02236 & 0.08667 & 0.03324 \\
& \multicolumn{1}{c|}{} & MWUF & 0.04594 & 0.02058 & \underline{0.05768} & \underline{0.02489} & \textbf{0.11119} & \textbf{0.04685} \\
& \multicolumn{1}{c|}{} & Ours & \underline{0.06281} & \textbf{0.02848} & \textbf{0.05980} & \textbf{0.02832} & \underline{0.11095} & \underline{0.04440} \\ \cmidrule{2-9}
& \multicolumn{1}{c|}{\multirow{5}{*}{LightGCN}} & Normal & 0.06381 & 0.02769 & 0.05882 & 0.02438 & 0.09429 & 0.04429 \\
& \multicolumn{1}{c|}{} & RINCE & 0.09695 & 0.04314 & \underline{0.08852} & \underline{0.03838} & \underline{0.12119} & \textbf{0.05403} \\
& \multicolumn{1}{c|}{} & DECL & \underline{0.09937} & \underline{0.04436} & 0.06250 & 0.02632 & 0.11500 & 0.04423 \\
& \multicolumn{1}{c|}{} & MWUF & 0.06953 & 0.03029 & 0.06316 & 0.02696 & 0.11024 & 0.04769 \\
& \multicolumn{1}{c|}{} & Ours & \textbf{0.10862} & \textbf{0.04869} & \textbf{0.09494} & \textbf{0.04257} & \textbf{0.12667} & \underline{0.05134} \\ \cmidrule{2-9}
& \multicolumn{1}{c|}{\multirow{5}{*}{SimGCL}} & Normal & 0.06550 & 0.02972 & 0.05934 & 0.02546 & 0.09571 & 0.04785 \\
& \multicolumn{1}{c|}{} & RINCE & 0.09815 & 0.04444 & \underline{0.09316} & \underline{0.04132} & 0.12167 & 0.05183 \\
& \multicolumn{1}{c|}{} & DECL & \underline{0.10749} & \underline{0.04943} & 0.06177 & 0.02609 & 0.09833 & 0.04649 \\
& \multicolumn{1}{c|}{} & MWUF & 0.07235 & 0.03227 & 0.06189 & 0.02614 & \underline{0.12524} & \underline{0.05732} \\
& \multicolumn{1}{c|}{} & Ours & \textbf{0.11410} & \textbf{0.05312} & \textbf{0.10098} & \textbf{0.04427} & \textbf{0.12881} & \textbf{0.05758} \\ \midrule

\multicolumn{1}{c|}{\multirow{15}{*}{Overall}} & \multicolumn{1}{c|}{\multirow{5}{*}{NeuMF}} & Normal & 0.02932 & 0.01375 & 0.04261 & 0.01891 & 0.06949 & 0.02763 \\
& \multicolumn{1}{c|}{} & RINCE & \textbf{0.04270} & \underline{0.01776} & 0.04191 & 0.01710 & 0.06852 & 0.02791 \\
& \multicolumn{1}{c|}{} & DECL & 0.03922 & 0.01674 & 0.04231 & 0.01723 & 0.06134 & 0.02351 \\
& \multicolumn{1}{c|}{} & MWUF & 0.02987 & 0.01335 & \underline{0.04442} & \underline{0.01917} & \underline{0.07635} & \textbf{0.03214} \\
& \multicolumn{1}{c|}{} & Ours & \underline{0.04114} & \textbf{0.01829} & \textbf{0.04674} & \textbf{0.02182} & \textbf{0.07879} & \underline{0.03113} \\ \cmidrule{2-9}
& \multicolumn{1}{c|}{\multirow{5}{*}{LightGCN}} & Normal & 0.04073 & 0.01768 & 0.04524 & 0.01875 & 0.06493 & 0.03043 \\
& \multicolumn{1}{c|}{} & RINCE & 0.06189 & 0.02754 & \underline{0.06809} & \underline{0.02952} & \underline{0.08434}& \textbf{0.03741} \\
& \multicolumn{1}{c|}{} & DECL & \underline{0.06344} & \underline{0.02832} & 0.04807 & 0.02024 & 0.07912 & 0.03039 \\
& \multicolumn{1}{c|}{} & MWUF & 0.04445 & 0.01935 & 0.04858 & 0.02073 & 0.07602 & 0.03283 \\
& \multicolumn{1}{c|}{} & Ours & \textbf{0.06934} & \textbf{0.03109} & \textbf{0.07303} & \textbf{0.03275} & \textbf{0.08842} & \underline{0.03555} \\ \cmidrule{2-9}
& \multicolumn{1}{c|}{\multirow{5}{*}{SimGCL}} & Normal & 0.04207 & 0.01904 & 0.04564 & 0.01958 & 0.06607 & 0.03291\\
& \multicolumn{1}{c|}{} & RINCE & 0.06266 & 0.02837 & \underline{0.07165} & \underline{0.03178} & 0.08483 & 0.03583 \\
& \multicolumn{1}{c|}{} & DECL & \underline{0.06862} & \underline{0.03156} & 0.04751 & 0.02007 & 0.06819 & 0.03208 \\
& \multicolumn{1}{c|}{} & MWUF & 0.04641 & 0.02066 & 0.04760 & 0.02011 & \underline{0.08646} & \underline{0.03946}\\
& \multicolumn{1}{c|}{} & Ours & \textbf{0.07287} & \textbf{0.03392} & \textbf{0.07767} & \textbf{0.03405} & \textbf{0.09005} & \textbf{0.03993} \\ \bottomrule
\end{tabular}
\end{table*}

\subsubsection{Baseline Methods}

To demonstrate the efficacy of our proposed DIF in denoising implicit feedback for cold-start recommendation, we compare DIF with the state-of-the-art model-agnostic denoising and cold-start methods. 
In particular, 
1) \textbf{DECL}~\cite{qin2022deep} captures the uncertainty caused by noise and separates clean and noisy data, employing them differently for learning with noisy correspondence. 
2) \textbf{RINCE}~\cite{chuang2022robust} introduces a new contrastive learning objective designed to be robust to noisy data views. 
3) \textbf{MWUF}~\cite{zhu2021learning} proposes meta-scaling and shifting networks to generate initial embedding representations for cold-start items within the warm feature space, demonstrating greater stability from noisy samples.  
We implement DIF and the aforementioned model-agnostic baselines to three representative backend models. 
1) \textbf{NeuMF}~\cite{he2017neural} models the relationship between users and
items by combining GMF and a MLP. 
2) \textbf{LightGCN}~\cite{he2020lightgcn} leverages high-order neighbors information to enhance the user and item representations. 
3) \textbf{SimGCL}~\cite{yu2022graph} simplifies the contrastive loss and optimizes uniformity representations.

\subsubsection{Implementation Details}

We set the embedding dimension $d$ fixed to 64 for all models. 
We optimize all models with the Adam~\cite{kingma2014adam} optimizer, where the batch size is fixed at 1024. 
We set the learning rate to 0.001, the number of similar warm items $k$ to 5, and the temperature coefficient $\tau$ to 0.3. 
We use the Xavier initializer\cite{glorot2010understanding}, and distinguish warm items and cold items based on the 20th percentile of interaction counts for each dataset and determine the specific values of $\alpha$ and $\gamma$ in uncertainty estimation.

\subsection{Performance Comparison}

We conduct comprehensive experiments in natural noise setting to compare DIF’s performance with other referenced baselines. 
The results, including on cold, warm, and overall items, are illustrated in Table~\ref{main_performance}, yields several key observations:
\begin{itemize}[leftmargin=*]
\item All the denoising methods show better results compared with normal training, especially in the improvement of performance on cold items, which indicates the necessity of denoising implicit feedback for cold-start recommendations. 
\item Jointly analyzing the performance of different backbones in recommender systems, we observe that NeuMF demonstrates more significant performance on cold items compared to graph-based methods such as LightGCN and SimGCL. 
This advantage arises because graph-based methods rely heavily on the propagation of collaborative signals along edges, making them more effective in data-rich scenarios. 
Nevertheless, our method is model-agnostic and achieves the best results across various backbones on cold items, which demonstrates the superior scalability of DIF.
\item DECL and RINCE, as general denoising methods directly transferred to recommendation tasks, improve performance on both cold and warm items, though the gains are not substantial. 
MWUF performs reasonably well on cold items but shows limited improvement on warm items and overall performance. 
In contrast, our method focuses on denoising implicit feedback for cold items without causing unacceptable impacts on warm items or overall performance, even achieving performance enhancements.
\item Our DIF does not achieve the optimal NDCG on cold items for Tiktok. 
This is partly due to the small and sparser nature of the TikTok. 
Moreover, our primary focus is on retrieve optimization rather than ranking, which remains one of future directions.
\item The proposed DIF effectively enhances the performance of all base models and outperforms most denoising methods across three datasets. 
We attribute these improvements to DIF's content-based denoising approach using warm items for cold items: 
(1) By modeling the confidence of multiple pseudo-labels, DIF further improves the accuracy of aggregated pseudo-labels based on content similarity scores. 
(2) Benefiting from our uncertainty estimation, DIF accurately utilizes pseudo-labels and guides the sample label correction process. 
\end{itemize}

\subsection{Ablation Study}

\begin{table}[t]
\caption{Ablation study on key components of DIF.}
\begin{tabular}{clccc}
\toprule
Datasets                & Method & Cold & Warm & Overall \\ \midrule
\multirow{4}{*}{Tiktok} & $\mathtt{w/o}$ CM & 0.00311 & 0.10857 & 0.07537 \\
 & $\mathtt{w/o}$ RE & 0.00152 & 0.12214 & 0.08385 \\
 & $\mathtt{w/o}$ CS & 0.00104 & 0.10619 & 0.07308 \\
 & DIF & \textbf{0.00518} & \textbf{0.12667} & \textbf{0.08842} \\ \midrule
\multirow{4}{*}{Sports} & $\mathtt{w/o}$ CM & 0.00293 & 0.09804 & 0.06259 \\
 & $\mathtt{w/o}$ RE & 0.00278 & 0.09685 & 0.06182 \\
 & $\mathtt{w/o}$ CS  & 0.00264 & 0.09318 & 0.05954\\
 & DIF & \textbf{0.00325} & \textbf{0.10862} & \textbf{0.06934} \\ \bottomrule
\end{tabular}
\label{ablation}
\end{table}

To evaluate the effectiveness of each component in our method, we perform ablation studies on Tiktok and Amazon Sports using LightGCN as the backbone.
We present the results of Recall@20 in Table~\ref{ablation}. We can see that:
\begin{itemize}[leftmargin=*]
\item Without the Confidence Modeling ($\mathtt{w/o}$ CM), we aggregate multiple pseudo-labels for each sample using average pooling, without considering content similarity distinctions. 
This variant has minimal impact on the performance of cold items, suggesting that current multi-modal representations are already highly reliable, with well-guaranteed similarity among the retrieved warm items.
\item We ablate the relative entropy component from the uncertainty estimation with the variant  $\mathtt{w/o}$ RE, which significantly affects cold items but has a limited impact on warm items. 
High relative entropy indicates a greater likelihood of label noise. 
However, since the collaborative representations of warm items are already well-trained, situations with high relative entropy are rare.
\item We make another comparison between DIF and the variant ($\mathtt{w/o}$ CS) without consideration of cold-start status, which demonstrates the most significant impact for cold items. 
This can be attributed to two factors: First, the failure to account for the varying probabilities of label noise across different cold-start states. Second, the potential disruption to representation learning for warm items, which may subsequently affect user modeling and ultimately influence recommendations for cold items.
\end{itemize}

\subsection{Noise Robustness}

\begin{figure*}[t]
\centering
\includegraphics[width=\linewidth]{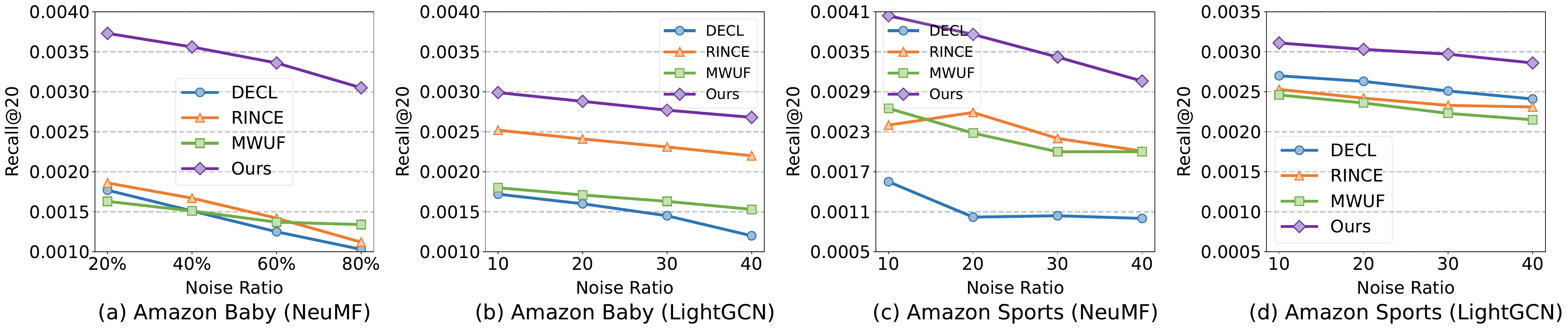}
\caption{Performance comparison of denoise training with random noises in Amazon datasets.}
\label{robust}
\end{figure*}

We conduct random noisy training on Amazon datasets using the backbones of NeuMF and LightGCN to evaluate the noise resistance capability of DIF, comparing it with the three competitive model-agnostic methods, DECL, RINCE and MWUF. 
The proportion of noise in our training settings spanned from 20\% to 80\%. 
We report the results on cold items in Figure~\ref{robust}. 
Similar results are seen with Tiktok and other backbones, but figures are omitted for brevity. 
The results show that: 
1) As the noise ratio increases, the performance of DECL, RINCE, MWUF, and DIF generally declines. 
This decline is attributed to the intensifying corruption of data due to the escalating noise level, making the representation learning for cold-start items and the modeling of user preferences increasingly challenging. 
2) DIF consistently outperforms all other denoising methods across different noise ratio settings. 
This highlights the remarkable noise robustness of DIF in cold-start scenarios, which can be attributed to the effectiveness of its strategy of generating pseudo-labels through content-similar warm items.


\subsection{Online A/B Testing}

\begin{table}[t]
\caption{The results of A/B testing in online scenario.}
\begin{tabular}{cccc}
\toprule
Effective View & Watch Time & Long View & Short View \\ 
\rowcolor{mygray}+2.327\% &+2.921\% &+2.688\% & -2.030\% \\ \midrule
Like & Comment & Follow & Share \\
\rowcolor{mygray}+2.921\% &+2.435\% &+2.790\% &+0.876\% \\ \bottomrule
\end{tabular}
\label{ab}
\end{table}

We carried out rigorous online A/B testing in our short-video streaming scenario from Jan. 12, 2025, to Jan. 16, 2025, with hundreds of millions of users per day. 
The results of the online A/B test for cold-start recommendation are shown in Table~\ref{ab}, we focus on several commercial metrics, which can be divided into view-related and action-related. 
For view-related metrics, we introduce metrics such as \textit{effective view}, \textit{watch time}, \textit{long view}, and \textit{short view}. 
We define an ``effective view'' label as user watches a video longer than a threshold (e.g., 5 seconds), and videos in different duration intervals have different thresholds.
Meanwhile, the ``long view'' label follows stricter criterias compared to ``effective view''.
For action-related metrics, include \textit{like}, \textit{comment}, \textit{follow}, and \textit{share}. 
For example, a ``like'' is defined as the user likes current video by clicking the like button or double tapping/long pressing screen. 
For company privacy, we cannot report the real metrics of the original online models. 
Instead, we report the performance gain ratio improved by our approach DIF. 
It is worth noting that one percent improvement ratio usually indicates a large improvement of the recommendation capacity in real-world application scenario, when tested on a large population of users. 
By effectively denoising cold-start item samples using our method, both view-related and action-related metrics show significant improvement, contributing to a substantial enhancement in the recommendation efficiency for cold-start items.

\section{Related Works}
\textbf{Cold-start Recommendation.}
Cold-start is one of the main challenges in recommender systems. 
Specifically, we focus on the more challenging scenario where cold-start items are newly uploaded and lack any user feedback. 
The common solution to this issue can be categorized into two types, namely content-based and transfer learning based methods.
The first type of methods~\cite{wei2023multi,chen2024multi,wang2024large,jeon2024cold,liu2024fine} aims to exploit side information, especially content features of items to compensate the absence of collaborative signals. 
The core idea is to approximate the well-trained collaborative embeddings via content information by modeling their correlation.  
Another way to alleviate the cold-start problem is to transfer knowledge from other domains, such as cross-domain recommendation~\cite{cao2022cross,zang2022survey,chen2023win}, transfer learning methods~\cite{zhang2021model,sheng2021one,zhang2023empowering}, meta-learning methods~\cite{dong2020mamo,zhu2021learning}, and prompt learning~\cite{jiang2024prompt,kusano2024data}.
However, few existing studies on cold-start recommendation recognize the importance of implicit feedback denoising and design practical deployment solutions.

\noindent
\textbf{Denoising Recommendation.}
Existing recommender systems are typically trained with implicit feedback. 
Recently, some studies~\cite{wang2021denoising,wang2021implicit,zhao2024denoising,wang2023efficient} have noticed that implicit feedback could be easily corrupted by different factors (e.g., popularity bias~\cite{chen2023bias} and user unconscious behaviors~\cite{hu2008collaborative}), and the inevitable noise would dramatically degrade the recommendation performance. 
As a result, some efforts have been dedicated to solving the noisy implicit feedback problem, which can be categorized into sample selection methods~\cite{ding2019sampler,park2019adversarial,yu2020sampler,LearnAlign} and sample re-weighting methods~\cite{hu2021next,wang2022learning}.
The sample selection methods aim to select clean and informative samples, which exhibit high performance variation since they heavily depend on the sampling distribution.
Sample re-weighting methods focus on assigning lower weights to inaccurate interactions by the learning process signals of models (e.g., loss values and predictions). 
However, these methods heavily rely on predefined heuristic assumptions, resulting in poor generalization for cold-start models.

\section{Conclusion}
To address the implicit feedback denoising in cold-start recommendation, this paper introduces a model-agnostic approach called DIF, which can be flexibly applied to various online models. 
Inspired by the item collaborative filtering, we assign pseudo-labels to cold-start items through the user and warm items with similar content, and we present a detailed industrial implement solution. 
We further model the confidence of each pseudo-label based on content similarity to improve the accuracy of the final generated pseudo label.
Furthermore, we estimate the uncertainty of the noisy sample label to guide the participation of the pseudo-label during the label correction process, thereby avoiding overcorrection.
We share the successful experience of deploying DIF on Kuaishou. Extensive offline experiments and online A/B testing further verify its effectiveness in cold-start recommendation tasks. 

\bibliographystyle{ACM-Reference-Format}
\bibliography{sample-base}

@book{Boucheron2013Concentration,
title={Concentration Inequalities - A Nonasymptotic Theory of Independence},
	author={St{\'e}phane Boucheron and G{\'a}bor Lugosi and Pascal Massart},
    publisher= {OUP Oxford},
year={2013}
}

@inproceedings{LearnAlign,
  title={LearnAlign: Data selection for LLM reinforcement learning with improved gradient alignment},
  author={Shipeng Li and Zhiqin Yang and Shikun Li and Xiaobo Xia and Hengyu Liu and Xinghua Zhang and Gaode Chen and Dong Fang and Ying Tai and Zhe Peng},
  booktitle={Findings of the Association for Computational Linguistics: ACL 2026},
  year ={2026}
}

@inproceedings{bahri2020deep,
  title={Deep k-nn for noisy labels},
  author={Bahri, Dara and Jiang, Heinrich and Gupta, Maya},
  booktitle={International Conference on Machine Learning},
  pages={540--550},
  year={2020},
  organization={PMLR}
}

@article{gao2018consistency,
  title={On the consistency of exact and approximate nearest neighbor with noisy data},
  author={Gao, Wei and Niu, Xin-Yi and Zhou, Zhi-Hua},
  journal={Arxiv, abs/1607.07526},
  year={2018}
}

@book{gyorfi2002distribution,
  title={A distribution-free theory of nonparametric regression},
  author={Gy{\"o}rfi, L{\'a}szl{\'o} and Kohler, Michael and Krzy{\.z}ak, Adam and Walk, Harro},
  year={2002},
  publisher={Springer}
}

@inproceedings{gong2022real,
  title={Real-time short video recommendation on mobile devices},
  author={Gong, Xudong and Feng, Qinlin and Zhang, Yuan and Qin, Jiangling and Ding, Weijie and Li, Biao and Jiang, Peng and Gai, Kun},
  booktitle={Proceedings of the 31st ACM international conference on information \& knowledge management},
  pages={3103--3112},
  year={2022}
}

@article{cover1991entropy,
  title={Entropy, relative entropy and mutual information},
  author={Cover, Thomas M and Thomas, Joy A and others},
  journal={Elements of information theory},
  volume={2},
  number={1},
  pages={12--13},
  year={1991}
}

@inproceedings{tanaka2018joint,
  title={Joint optimization framework for learning with noisy labels},
  author={Tanaka, Daiki and Ikami, Daiki and Yamasaki, Toshihiko and Aizawa, Kiyoharu},
  booktitle={Proceedings of the IEEE conference on computer vision and pattern recognition},
  pages={5552--5560},
  year={2018}
}

@inproceedings{arazo2019unsupervised,
  title={Unsupervised label noise modeling and loss correction},
  author={Arazo, Eric and Ortego, Diego and Albert, Paul and O’Connor, Noel and McGuinness, Kevin},
  booktitle={International conference on machine learning},
  pages={312--321},
  year={2019},
}

@inproceedings{cai2023two,
  title={Two-stage constrained actor-critic for short video recommendation},
  author={Cai, Qingpeng and Xue, Zhenghai and Zhang, Chi and Xue, Wanqi and Liu, Shuchang and Zhan, Ruohan and Wang, Xueliang and Zuo, Tianyou and Xie, Wentao and Zheng, Dong and others},
  booktitle={Proceedings of the ACM Web Conference 2023},
  pages={865--875},
  year={2023}
}

@inproceedings{zhou2018deep,
  title={Deep interest network for click-through rate prediction},
  author={Zhou, Guorui and Zhu, Xiaoqiang and Song, Chenru and Fan, Ying and Zhu, Han and Ma, Xiao and Yan, Yanghui and Jin, Junqi and Li, Han and Gai, Kun},
  booktitle={Proceedings of the 24th ACM SIGKDD international conference on knowledge discovery \& data mining},
  pages={1059--1068},
  year={2018}
}

@inproceedings{pi2020search,
  title={Search-based user interest modeling with lifelong sequential behavior data for click-through rate prediction},
  author={Pi, Qi and Zhou, Guorui and Zhang, Yujing and Wang, Zhe and Ren, Lejian and Fan, Ying and Zhu, Xiaoqiang and Gai, Kun},
  booktitle={Proceedings of the 29th ACM International Conference on Information \& Knowledge Management},
  pages={2685--2692},
  year={2020}
}

@inproceedings{ijcai2021p197,
  title     = {Exploring Periodicity and Interactivity in Multi-Interest Framework for Sequential Recommendation},
  author    = {Chen, Gaode and Zhang, Xinghua and Zhao, Yanyan and Xue, Cong and Xiang, Ji},
  booktitle = {Proceedings of the Thirtieth International Joint Conference on Artificial Intelligence},
  pages     = {1426--1433},
  year      = {2021},
  month     = {8},
}

@inproceedings{chen2024multi,
  title={A Multi-modal Modeling Framework for Cold-start Short-video Recommendation},
  author={Chen, Gaode and Sun, Ruina and Jiang, Yuezihan and Cao, Jiangxia and Zhang, Qi and Lin, Jingjian and Li, Han and Gai, Kun and Zhang, Xinghua},
  booktitle={Proceedings of the 18th ACM Conference on Recommender Systems},
  pages={391--400},
  year={2024}
}

@inproceedings{jiang2024prompt,
  title={Prompt Tuning for Item Cold-start Recommendation},
  author={Jiang, Yuezihan and Chen, Gaode and Zhang, Wenhan and Wang, Jingchi and Jiang, Yinjie and Zhang, Qi and Lin, Jingjian and Jiang, Peng and Bian, Kaigui},
  booktitle={Proceedings of the 18th ACM Conference on Recommender Systems},
  pages={411--421},
  year={2024}
}

@inproceedings{cao2022gift,
  title={Gift: Graph-guided feature transfer for cold-start video click-through rate prediction},
  author={Cao, Yi and Hu, Sihao and Gong, Yu and Li, Zhao and Yang, Yazheng and Liu, Qingwen and Ji, Shouling},
  booktitle={Proceedings of the 31st ACM International Conference on Information \& Knowledge Management},
  pages={2964--2973},
  year={2022}
}

@inproceedings{zhou2023contrastive,
  title={Contrastive Collaborative Filtering for Cold-Start Item Recommendation},
  author={Zhou, Zhihui and Zhang, Lilin and Yang, Ning},
  booktitle={Proceedings of the ACM Web Conference 2023},
  pages={928--937},
  year={2023}
}

@article{zhang2019deep,
  title={Deep learning based recommender system: A survey and new perspectives},
  author={Zhang, Shuai and Yao, Lina and Sun, Aixin and Tay, Yi},
  journal={ACM computing surveys (CSUR)},
  volume={52},
  number={1},
  pages={1--38},
  year={2019}
}

@article{zang2022survey,
  title={A survey on cross-domain recommendation: taxonomies, methods, and future directions},
  author={Zang, Tianzi and Zhu, Yanmin and Liu, Haobing and Zhang, Ruohan and Yu, Jiadi},
  journal={ACM Transactions on Information Systems},
  volume={41},
  number={2},
  pages={1--39},
  year={2022},
  publisher={ACM New York, NY}
}

@inproceedings{sheng2021one,
  title={One model to serve all: Star topology adaptive recommender for multi-domain ctr prediction},
  author={Sheng, Xiang-Rong and Zhao, Liqin and Zhou, Guorui and Ding, Xinyao and Dai, Binding and Luo, Qiang and Yang, Siran and Lv, Jingshan and Zhang, Chi and Deng, Hongbo and others},
  booktitle={Proceedings of the 30th ACM International Conference on Information \& Knowledge Management},
  pages={4104--4113},
  year={2021}
}

@inproceedings{dong2020mamo,
  title={Mamo: Memory-augmented meta-optimization for cold-start recommendation},
  author={Dong, Manqing and Yuan, Feng and Yao, Lina and Xu, Xiwei and Zhu, Liming},
  booktitle={Proceedings of the 26th ACM SIGKDD international conference on knowledge discovery \& data mining},
  pages={688--697},
  year={2020}
}

@inproceedings{zhu2021learning,
  title={Learning to warm up cold item embeddings for cold-start recommendation with meta scaling and shifting networks},
  author={Zhu, Yongchun and Xie, Ruobing and Zhuang, Fuzhen and Ge, Kaikai and Sun, Ying and Zhang, Xu and Lin, Leyu and Cao, Juan},
  booktitle={Proceedings of the 44th International ACM SIGIR Conference on Research and Development in Information Retrieval},
  pages={1167--1176},
  year={2021}
}

@inproceedings{reimers2019sentence,
  title={Sentence-BERT: Sentence Embeddings using Siamese BERT-Networks},
  author={Reimers, Nils and Gurevych, Iryna},
  booktitle={Proceedings of the 2019 Conference on Empirical Methods in Natural Language Processing and the 9th International Joint Conference on Natural Language Processing (EMNLP-IJCNLP)},
  pages={3982--3992},
  year={2019}
}

@inproceedings{wei2023multi,
  title={Multi-modal self-supervised learning for recommendation},
  author={Wei, Wei and Huang, Chao and Xia, Lianghao and Zhang, Chuxu},
  booktitle={Proceedings of the ACM Web Conference 2023},
  pages={790--800},
  year={2023}
}

@article{kingma2014adam,
  title={Adam: A method for stochastic optimization},
  author={Kingma, Diederik P and Ba, Jimmy},
  journal={arXiv preprint arXiv:1412.6980},
  year={2014}
}

@inproceedings{glorot2010understanding,
  title={Understanding the difficulty of training deep feedforward neural networks},
  author={Glorot, Xavier and Bengio, Yoshua},
  booktitle={Proceedings of the thirteenth international conference on artificial intelligence and statistics},
  pages={249--256},
  year={2010}
}

@inproceedings{he2017neural,
  title={Neural collaborative filtering},
  author={He, Xiangnan and Liao, Lizi and Zhang, Hanwang and Nie, Liqiang and Hu, Xia and Chua, Tat-Seng},
  booktitle={Proceedings of the 26th international conference on world wide web},
  pages={173--182},
  year={2017}
}

@inproceedings{he2020lightgcn,
  title={Lightgcn: Simplifying and powering graph convolution network for recommendation},
  author={He, Xiangnan and Deng, Kuan and Wang, Xiang and Li, Yan and Zhang, Yongdong and Wang, Meng},
  booktitle={Proceedings of the 43rd International ACM SIGIR conference on research and development in Information Retrieval},
  pages={639--648},
  year={2020}
}

@inproceedings{wang2023efficient,
  title={Efficient bi-level optimization for recommendation denoising},
  author={Wang, Zongwei and Gao, Min and Li, Wentao and Yu, Junliang and Guo, Linxin and Yin, Hongzhi},
  booktitle={Proceedings of the 29th ACM SIGKDD Conference on Knowledge Discovery and Data Mining},
  pages={2502--2511},
  year={2023}
}

@inproceedings{wang2022learning,
  title={Learning robust recommenders through cross-model agreement},
  author={Wang, Yu and Xin, Xin and Meng, Zaiqiao and Jose, Joemon M and Feng, Fuli and He, Xiangnan},
  booktitle={Proceedings of the ACM Web Conference 2022},
  pages={2015--2025},
  year={2022}
}

@inproceedings{wang2021implicit,
  title={Implicit feedbacks are not always favorable: Iterative relabeled one-class collaborative filtering against noisy interactions},
  author={Wang, Zitai and Xu, Qianqian and Yang, Zhiyong and Cao, Xiaochun and Huang, Qingming},
  booktitle={Proceedings of the 29th ACM International Conference on Multimedia},
  pages={3070--3078},
  year={2021}
}

@inproceedings{yu2020sampler,
  title={Sampler design for implicit feedback data by noisy-label robust learning},
  author={Yu, Wenhui and Qin, Zheng},
  booktitle={Proceedings of the 43rd international ACM SIGIR conference on research and development in information retrieval},
  pages={861--870},
  year={2020}
}

@inproceedings{hu2021next,
  title={What is next when sequential prediction meets implicitly hard interaction?},
  author={Hu, Kaixi and Li, Lin and Xie, Qing and Liu, Jianquan and Tao, Xiaohui},
  booktitle={Proceedings of the 30th ACM International Conference on Information \& Knowledge Management},
  pages={710--719},
  year={2021}
}

@inproceedings{wang2021denoising,
  title={Denoising implicit feedback for recommendation},
  author={Wang, Wenjie and Feng, Fuli and He, Xiangnan and Nie, Liqiang and Chua, Tat-Seng},
  booktitle={Proceedings of the 14th ACM international conference on web search and data mining},
  pages={373--381},
  year={2021}
}

@inproceedings{he2024double,
  title={Double correction framework for denoising recommendation},
  author={He, Zhuangzhuang and Wang, Yifan and Yang, Yonghui and Sun, Peijie and Wu, Le and Bai, Haoyue and Gong, Jinqi and Hong, Richang and Zhang, Min},
  booktitle={Proceedings of the 30th ACM SIGKDD Conference on Knowledge Discovery and Data Mining},
  pages={1062--1072},
  year={2024}
}

@inproceedings{wang2023fresh,
  title={Fresh Content Needs More Attention: Multi-funnel Fresh Content Recommendation},
  author={Wang, Jianling and Lu, Haokai and Zhang, Sai and Locanthi, Bart and Wang, Haoting and Greaves, Dylan and Lipshitz, Benjamin and Badam, Sriraj and Chi, Ed H and Goodrow, Cristos J and others},
  booktitle={Proceedings of the 29th ACM SIGKDD Conference on Knowledge Discovery and Data Mining},
  pages={5082--5091},
  year={2023}
}

@inproceedings{zhao2024denoising,
  title={Denoising diffusion recommender model},
  author={Zhao, Jujia and Wenjie, Wang and Xu, Yiyan and Sun, Teng and Feng, Fuli and Chua, Tat-Seng},
  booktitle={Proceedings of the 47th International ACM SIGIR Conference on Research and Development in Information Retrieval},
  pages={1370--1379},
  year={2024}
}

@inproceedings{gao2022self,
  title={Self-guided learning to denoise for robust recommendation},
  author={Gao, Yunjun and Du, Yuntao and Hu, Yujia and Chen, Lu and Zhu, Xinjun and Fang, Ziquan and Zheng, Baihua},
  booktitle={Proceedings of the 45th International ACM SIGIR Conference on Research and Development in Information Retrieval},
  pages={1412--1422},
  year={2022}
}

@inproceedings{zhang2023divide,
  title={Divide and Conquer: Towards Better Embedding-based Retrieval for Recommender Systems from a Multi-task Perspective},
  author={Zhang, Yuan and Dong, Xue and Ding, Weijie and Li, Biao and Jiang, Peng and Gai, Kun},
  booktitle={Companion Proceedings of the ACM Web Conference 2023},
  pages={366--370},
  year={2023}
}

@inproceedings{yang2020mixed,
  title={Mixed negative sampling for learning two-tower neural networks in recommendations},
  author={Yang, Ji and Yi, Xinyang and Zhiyuan Cheng, Derek and Hong, Lichan and Li, Yang and Xiaoming Wang, Simon and Xu, Taibai and Chi, Ed H},
  booktitle={Companion proceedings of the web conference 2020},
  pages={441--447},
  year={2020}
}

@article{johnson2019billion,
  title={Billion-scale similarity search with gpus},
  author={Johnson, Jeff and Douze, Matthijs and J{\'e}gou, Herv{\'e}},
  journal={IEEE Transactions on Big Data},
  volume={7},
  number={3},
  pages={535--547},
  year={2019},
  publisher={IEEE}
}

@inproceedings{chuang2022robust,
  title={Robust contrastive learning against noisy views},
  author={Chuang, Ching-Yao and Hjelm, R Devon and Wang, Xin and Vineet, Vibhav and Joshi, Neel and Torralba, Antonio and Jegelka, Stefanie and Song, Yale},
  booktitle={Proceedings of the IEEE/CVF Conference on Computer Vision and Pattern Recognition},
  pages={16670--16681},
  year={2022}
}

@inproceedings{yu2022graph,
  title={Are graph augmentations necessary? simple graph contrastive learning for recommendation},
  author={Yu, Junliang and Yin, Hongzhi and Xia, Xin and Chen, Tong and Cui, Lizhen and Nguyen, Quoc Viet Hung},
  booktitle={Proceedings of the 45th international ACM SIGIR conference on research and development in information retrieval},
  pages={1294--1303},
  year={2022}
}

@inproceedings{qin2022deep,
  title={Deep evidential learning with noisy correspondence for cross-modal retrieval},
  author={Qin, Yang and Peng, Dezhong and Peng, Xi and Wang, Xu and Hu, Peng},
  booktitle={Proceedings of the 30th ACM International Conference on Multimedia},
  pages={4948--4956},
  year={2022}
}

@article{chen2023bias,
  title={Bias and debias in recommender system: A survey and future directions},
  author={Chen, Jiawei and Dong, Hande and Wang, Xiang and Feng, Fuli and Wang, Meng and He, Xiangnan},
  journal={ACM Transactions on Information Systems},
  volume={41},
  number={3},
  pages={1--39},
  year={2023},
  publisher={ACM New York, NY}
}

@inproceedings{hu2008collaborative,
  title={Collaborative filtering for implicit feedback datasets},
  author={Hu, Yifan and Koren, Yehuda and Volinsky, Chris},
  booktitle={2008 Eighth IEEE international conference on data mining},
  pages={263--272},
  year={2008},
  organization={Ieee}
}

@article{ding2019sampler,
  title={Sampler design for bayesian personalized ranking by leveraging view data},
  author={Ding, Jingtao and Yu, Guanghui and He, Xiangnan and Feng, Fuli and Li, Yong and Jin, Depeng},
  journal={IEEE transactions on knowledge and data engineering},
  volume={33},
  number={2},
  pages={667--681},
  year={2019},
  publisher={IEEE}
}

@inproceedings{park2019adversarial,
  title={Adversarial sampling and training for semi-supervised information retrieval},
  author={Park, Dae Hoon and Chang, Yi},
  booktitle={The World Wide Web Conference},
  pages={1443--1453},
  year={2019}
}

@inproceedings{cao2022cross,
  title={Cross-domain recommendation to cold-start users via variational information bottleneck},
  author={Cao, Jiangxia and Sheng, Jiawei and Cong, Xin and Liu, Tingwen and Wang, Bin},
  booktitle={2022 IEEE 38th International Conference on Data Engineering (ICDE)},
  pages={2209--2223},
  year={2022},
  organization={IEEE}
}

@inproceedings{chen2023win,
  title={Win-win: a privacy-preserving federated framework for dual-target cross-domain recommendation},
  author={Chen, Gaode and Zhang, Xinghua and Su, Yijun and Lai, Yantong and Xiang, Ji and Zhang, Junbo and Zheng, Yu},
  booktitle={Proceedings of the AAAI Conference on Artificial Intelligence},
  volume={37},
  number={4},
  pages={4149--4156},
  year={2023}
}

@inproceedings{zhang2021model,
  title={A model of two tales: Dual transfer learning framework for improved long-tail item recommendation},
  author={Zhang, Yin and Cheng, Derek Zhiyuan and Yao, Tiansheng and Yi, Xinyang and Hong, Lichan and Chi, Ed H},
  booktitle={Proceedings of the web conference 2021},
  pages={2220--2231},
  year={2021}
}

@inproceedings{zhang2023empowering,
  title={Empowering Long-tail Item Recommendation through Cross Decoupling Network (CDN)},
  author={Zhang, Yin and Wang, Ruoxi and Cheng, Derek Zhiyuan and Yao, Tiansheng and Yi, Xinyang and Hong, Lichan and Caverlee, James and Chi, Ed H},
  booktitle={Proceedings of the 29th ACM SIGKDD Conference on Knowledge Discovery and Data Mining},
  pages={5608--5617},
  year={2023}
}

@inproceedings{kusano2024data,
  title={Data Augmentation using Reverse Prompt for Cost-Efficient Cold-Start Recommendation},
  author={Kusano, Genki},
  booktitle={Proceedings of the 18th ACM Conference on Recommender Systems},
  pages={861--865},
  year={2024}
}

@inproceedings{wang2024large,
  title={Large language models as data augmenters for cold-start item recommendation},
  author={Wang, Jianling and Lu, Haokai and Caverlee, James and Chi, Ed H and Chen, Minmin},
  booktitle={Companion Proceedings of the ACM Web Conference 2024},
  pages={726--729},
  year={2024}
}

@inproceedings{jeon2024cold,
  title={Cold-start bundle recommendation via popularity-based coalescence and curriculum heating},
  author={Jeon, Hyunsik and Lee, Jong-eun and Yun, Jeongin and Kang, U},
  booktitle={Proceedings of the ACM Web Conference 2024},
  pages={3277--3286},
  year={2024}
}

@article{liu2024fine,
  title={Fine Tuning Out-of-Vocabulary Item Recommendation with User Sequence Imagination},
  author={Liu, Ruochen and Chen, Hao and Bei, Yuanchen and Shen, Qijie and Zhong, Fangwei and Wang, Senzhang and Wang, Jianxin},
  journal={Advances in Neural Information Processing Systems},
  volume={37},
  pages={8930--8955},
  year={2024}
}


\end{document}